\long\def\comment#1{}
\newtheorem{theorem}{Theorem}
\newtheorem{corollary}{Corollary}[theorem]
\newtheorem{lemma}[theorem]{Lemma}
\newcommand{\E}{{\mathbb{E}}}
\newcommand{\expect}[2]{\E_{#1}\left[{#2}\right]}
\newcommand{\expectcond}[3]{\expect{#1}{{#2}\,|\,{#3}}}
\newcommand{\inner}[1]{\langle {#1} \rangle}
\newcommand{\R}{\mathbb{R}}
\newcommand{\1}[1]{\mathds{1}_{\{{#1}\}}}
\renewcommand{\P}{\mathbb{P}}
\newcommand{\probsub}[2]{{\P}_{#2}\left[{#1}\right]}
\newcommand{\probcondsub}[3]{{\probsub{#1\,|\,#2}{#3}}}
\newcommand{\method}[1]{{\textsc{#1}}} 
\newcommand{\secref}[1]{Sec. \ref{#1}}
\newcommand{\ourmthd}{{\method{IntPred}}}
\newcommand{\X}{{\mathcal{X}}}
\newcommand{\Y}{{\mathcal{Y}}}
\newcommand{\yhat}{{\hat{y}}}
\newcommand{\lhat}{{\hat{\ell}}}
\newcommand{\uhat}{{\hat{u}}}
\newcommand{\Deltahat}{{\hat{\Delta}}}
\renewcommand{\L}{\mathcal{L}}
\newcommand{\F}{\mathcal{F}}
\newcommand{\N}{\mathcal{N}}
\newcommand{\B}{\mathcal{B}}
\renewcommand{\H}{\mathcal{H}}
\renewcommand{\l}{{\ell}}
\renewcommand{\u}{{u}}
\newcommand{\intrvltmplt}[2]{{[ {#1} , {#2} ]}}
\newcommand{\intrvl}{\intrvltmplt{\l}{\u}}
\newcommand{\intrvlhat}{\intrvltmplt{\lhat}{\uhat}}
\renewcommand{\sup}[2]{{{#1}_{#2}}}
\newcommand{\xsup}[1]{\sup{x}{#1}}
\newcommand{\ysup}[1]{\sup{y}{#1}}
\newcommand{\xsupi}{{\xsup{i}}}
\newcommand{\ysupi}{{\ysup{i}}}
\newcommand{\lhati}{\sup{\lhat}{i}}
\newcommand{\uhati}{\sup{\uhat}{i}}
\newcommand{\Loss}[1]{{\L \left( {#1} \right)}}
\newcommand{\losssub}[2]{{L_{#1} \left( {#2} \right)}}
\newcommand{\lprxy}{{\tilde{L}}}
\newcommand{\lossprxy}[1]{{\lprxy \left( {#1} \right)}}
\newcommand{\loss}[1]{{\losssub{}{#1}}}
\newcommand{\fprxy}{{\tilde{f}}}
\newcommand{\budget}{{B}}
\newcommand{\conf}{{\alpha}}
\newcommand{\vcdim}{{k}}
\newcommand{\Base}{{\B}}
\newcommand{\base}{{b}}
\newcommand{\Thresh}{{\H}}
\newcommand{\thresh}{{h}}
\newcommand{\maxfs}{{\textsc{Max-FS}}}
\begin{document}

%

%

\twocolumn[

\aistatstitle{Discriminative Learning of Prediction Intervals}

\aistatsauthor{Nir Rosenfeld  \And  Yishay Mansour  \And  Elad Yom-Tov}

\aistatsaddress{Harvard University \\ and Microsoft Research
    \And  Tel Aviv University  \And  Microsoft Research} ]

\begin{abstract}
In this work we consider the task of constructing prediction intervals
in an inductive batch setting.
We present a discriminative learning framework 
which optimizes the \emph{expected} error rate under
a budget constraint on the interval sizes.
Most current methods for constructing prediction intervals
offer guarantees for a single new test point.
Applying these methods to multiple test points can result in 
a high computational overhead and degraded statistical guarantees.
By focusing on expected errors, our method allows for variability
in the per-example conditional error rates.
As we demonstrate both analytically and empirically,
this flexibility can increase the overall accuracy,
or alternatively, reduce the average interval size.

While the problem we consider is of a regressive flavor,
the loss we use is combinatorial.
This allows us to provide PAC-style, finite-sample guarantees.
Computationally, we show that our original objective is NP-hard,
and suggest a tractable convex surrogate.
We conclude with a series of experimental evaluations. 
\end{abstract}


\section{Introduction} \label{sec:intro}

Constructing an interval which contains some point
of interest with high probability is a fundamental task in statistics.
In contrast to point predictions, intervals provide some measure
of confidence, and can be used to reason about the reliability of a prediction.
In this paper we focus on \emph{prediction intervals} (PIs).
For some predetermined \emph{significance level} $\conf$,
the classic task of PI estimation is to construct an interval 
$\intrvl$ which will contain
a point $y \in \R$ sampled from some distribution $D$
with probability of at least $1-\conf$.
PIs are hence similar in spirit to \emph{confidence intervals},
but are designed to contain future sampled points rather than 
population parameters such as the mean or variance.

In this paper we consider non-parametric PI estimation
in a regressional setting.
Let $D=D_{XY}$ be an unknown joint distribution over
examples $x \in \X$ and labels $y \in \Y = \R$,
where $X,Y$ denote random variables and $x,y$ their instantiations.
In the classic PI task, we are given a training set
$S=\{ (\xsupi, \ysupi) \}_{i=1}^m$ of $m$
pairs sampled i.i.d. from $D_{XY}$,
and an additional test example $\xsup{m+1}$ sampled from the marginal $D_X$. 
Then, for a given confidence level $\conf$,
the goal is to construct an interval
$\intrvl \in \R^2$ 
which will contain the true label $\ysup{m+1} \sim D_{Y|X=x}$ 
with probability of at least $1-\conf$.
The classic approach to this task is to first estimate
the conditional $D_{Y|X=\xsup{m+1}}$,
and then infer the smallest interval which covers $1-\conf$ of the probability mass.
In the case of a Gaussian conditional distribution, this
leads to simple closed form solution for computing PIs,
and provides asymptotic guarantees when the data is indeed Gaussian
(see also Sec. \ref{sec:related}).
There are many variations on this parametric two-step approach
\cite{koenker1978regression,koenker2001quantile},
as well as other direct and non-parametric methods
\cite{stine1985bootstrap,koenker2005quantile,steinberger2016leave}.
%

Methods such as the above are designed for
the classic setting which concerns only a single new test example.
In many realistic cases, however, the task
is to predict intervals for a \emph{set} of future test points,
given only the training data.
This setting is known as \emph{inductive batch learning}
and has been extensively studied in machine learning.
As we later discuss in detail, applying single test-point methods
to the batch setting is rarely straightforward,
and often comes at the cost of a significant computational overhead 
and/or the loss of statistical guarantees
\cite{lei2013distribution,lei2014distribution,chen2016trimmed}.
This is also true for the recently popular methods based on conformal prediction
designed in the online learning setting \cite{burnaev2014efficiency,lei2015conformal}.

Our goal in this paper is to provide a general framework
for efficient PI estimation in the batch setting.
The approach we present formulates PI estimation
as a discriminative learning task,
where the goal is to learn an interval predictor with high expected accuracy.
To this end, we design a learning objective which takes into 
account the natural tradeoff between accuracy and interval size.
Our analysis provides computational guarantees on training such predictors,
as well as statistical guarantees on their generalization.
These ensure that
an interval predictor with an average error of $\conf$ on the training set
is guaranteed to have an expected error of roughly $\conf$ on the entire population.


The method we present differs from the standard single-point approach in three important ways.
First,
the learning objective we consider relaxes the classic $\conf$-confidence requirement
to hold \emph{in expectation}.
Focusing on expected errors allows our method to
predict intervals with different error rates for different examples.
Second,
a model-free discriminative framework separates
the task (predicting accurate intervals) from the model (Gaussian, linear, etc.).
To this end, we first identify an appropriate combinatorial loss function,
and then suggest a tractable convex surrogate.
This allows for trading off guarantees on optimization (in the linear case)
with potentially higher accuracy (via more expressive predictor classes). 
Third,
our approach allows for reversing the classic roles of accuracy and interval size:
instead of fixing an error rate of $\conf$ and predicting tight intervals,
we can minimize the error under a fixed \emph{budget} constraint
on the mean interval size.
Intuitively,
allowing for variability in individual interval sizes
enables the overall accuracy to be boosted by ``sacrificing'' some points
for the sake of others.
Overall, the formulation we present is geared towards maximizing expected accuracy,
rather than providing (asymptotic) worst-case guarantees.


For the analysis,
we partition the significance level $\conf$
into a confidence term $\delta$ (over the train set $S \sim D_{XY}^m$)
and an accuracy term $\epsilon$ (over new points $y \sim D_{Y|X}$),
thus promoting PAC-style results. 
Fixing a base class $\Base$ of real functions,
we show how the VC dimension of interval predictors based on $\Base$
can be expressed in terms of the VC dimension of thresholds over $\Base$.
This covers several well-established classes of binary classifiers.
Our results show that the VC of intervals is $O(d)$, where
$d$ is the VC of the corresponding threshold class.
Thus, our framework provides finite-sample bounds that depend only on the
function class.
This is in contrast 
to classic methods which typically offer asymptotic guarantees that
depend on the algorithm.


The paper is structured as follows.
In \secref{sec:related} we review related work on prediction intervals.
We then compare the single-point and batch settings in \secref{sec:sp_vs_batch},
and present our discriminative method in \secref{sec:method}.
\secref{sec:theory} contains a theoretical analysis of both
the statistical complexity (\secref{subsec:vcdim})
and computational complexity (\secref{subsec:nphard}) of our approach.
Finally, \secref{sec:experiments} contains a detailed experimental
evaluation of our method on a collection of benchmark datasets.
We conclude with a discussion of our results in \secref{sec:discussion}.

%
%
%
%
%


\section{Related Work} \label{sec:related}
The simplest approach for constructing PIs includes two steps.
In the first step, as in standard regression,
a point-predictor $f:\X \rightarrow \R$ is trained.
Then, intervals $\intrvl$ are constructed around the predictions $\yhat=f(x)$.
In the parametric setting,
the conditional distribution $D_{\Y|\X}$ is assumed to have
some parametric form
(for a comprehensive review, see \cite{geisser1993predictive}).
Given a new example $x$,
the interval boundaries $\l$ and $u$ are then set
to contain $1-\conf$ of the probability mass.
Many parametric forms of $D_{\Y|\X}$ lead to closed form solutions
for $\intrvl$.
For example, assuming $y \sim \N(\mu(x),1)$ gives:
\begin{equation}
\intrvl = \hat{\mu}(x) \pm \Phi^{-1}( \alpha/2)
\sqrt{\frac{m+1}{m}}
\label{eq:gaussian}
\end{equation}
where $\Phi^{-1}(\cdot)$ is the inverse normal cdf,
and the conditional mean estimates $\yhat=\hat{\mu}(x)$
can be learned via simple regression \cite{seber2012linear}.

In Eq. \eqref{eq:gaussian}, the interval boundaries are
in effect set to be the $\conf/2$ and $1-\conf/2$
quantiles of the (assumed) conditional distribution.
This suggests that the above two-stage process
can be replaced with a direct estimation of the conditional quantiles $q_\tau(x)$.
Quantile regression 
can then be used to predict PIs \cite{koenker2005quantile},
circumventing the need to explicitly state a distributional assumption.
This can either be done by minimizing a tilted version of the absolute loss
over a parametric hypothesis class \cite{koenker1978regression,koenker2001quantile},
or by using non-parametric empirical quantile estimates using
bootstrapping \cite{stine1985bootstrap} or
leave-one-out sampling \cite{steinberger2016leave}.

Methods such as the above are often justified by asymptotic guarantees.
These, however, do not always lead to good performance in practice.
Some methods offer corrections for finite-sample biases
\cite{schmoyer1992asymptotically,olive2007prediction},
but are often based on further assumptions,
and general sample complexity results are typically hard to obtain.

In a set of comprehensive papers on \emph{conformal prediction},
the authors expand the classic single test-point setting to an online setting
where a set of new examples are given in sequence \cite{gammerman2007hedging,shafer2008tutorial}.
The goal is then to minimize regret, namely predict intervals
which will be good in hindsight compared to the best fixed alternative.
Based on this,  recent work \cite{lei2017distribution} introduces
flexible extensions to the basic conformal method,
as well as additional finite-sample and distribution-free asymptotic guarantees.
While several works on conformal prediction discuss the batch setting
\cite{burnaev2014efficiency,lei2013distribution,lei2014distribution},
the implicit goal in these is still to achieve a per-example error of at most $\alpha$.
This typically induces a large computational overhead in the inductive setting,
since every new point effectively requires fitting a new model.
The split-conformal method in \cite{lei2015conformal} fits a single model,
but at the cost of larger intervals \cite{chen2016trimmed}.
Our method, in contrast, focuses on an inductive batch setting
where the goal is to minimize the expected error,
and a predictor is trained only once on a fixed training set.

Several works consider a learning setting that is similar to ours,
most of which are motivated by specific applications.
In \cite{khosravi2011lower}, the authors design a multi-criteria loss and train a neural network
to explicitly predict the interval boundaries.
Several extensions of this approach have been suggested as well \cite{taormina2015ann,hosen2015improving}.
In contrast to our method, the above methods balance accuracy and interval size heuristically,
use a discontinuous (and non-convex) objective,
offer no statistical guarantees,
and are applied only to small datasets.



\section{Single-point vs. Batch} \label{sec:sp_vs_batch}
Before presenting our method, it will be useful to
formally discuss the subtle but crucial distinctions
between the single test-point and inductive batch settings.
Let $f : \X \rightarrow \R \times \R$ be an interval predictor.
In the single test-point setting
(as well as in the online learning setting used in conformal prediction \cite{shafer2008tutorial}),
the requirement is to guarantee that:
\begin{equation}
\probsub{\ysup{m+1} \in f(\xsup{m+1})}{D^{m+1}} \ge 1-\conf 
\label{eq:pi_erequirement_weak}
\end{equation}
where $D^{m+1}$ denotes the joint probability over both
$S$ and $(\xsup{m+1},\ysup{m+1})$. 
In this setting, and in order for Eq. \eqref{eq:pi_erequirement_weak} to hold,
the algorithm for constructing $f$
is typically given access to both $S$ \emph{and} $\xsup{m+1}$.
In contrast, in the batch setting, the algorithm is allowed access to $S$ alone.
To highlight this distinction we use the notations $f(\,\cdotp\,|\,S,\xsup{m+1})$ and $f(\,\cdotp\,|\,S)$ accordingly.

Methods which are designed to satisfy Eq. \eqref{eq:pi_erequirement_weak} 
can in principle be applied to the batch setting.
This, however, often comes at either a computational or a statistical cost.
One popular and straightforward approach 
is to compute a new predictor $f_x = f( \cdotp \,|\,S,x)$
for every new test point $x$.
For example, this idea lies at the core of conformal prediction,
and is used to provide regret-type guarantees for the online learning settings.
While feasible, such an approach carries a significant computational overhead.
To circumvent this, other approaches
partition $S$ into effective `train' and `test' subsets
\cite{burnaev2014efficiency,lei2015conformal,chen2016trimmed}.
This allows them to work with a single predictor,
but can result in the downgrading or loss of statistical guarantees.

One way to restore some guarantees in this setting
is to tighten the condition in Eq. \eqref{eq:pi_erequirement_weak},
and require that the probability of success conditionally holds for \emph{all}\footnote{
	The guarantees in \cite{lei2014distribution} are for \emph{almost} all $x \in \X$.}
examples \cite{lei2014distribution}, namely:
\begin{equation}
\probcondsub{y \in f(x\,|\,S)}{X=x}{D^{m},D_{Y|X}} \ge 1-\conf \quad\;
\forall x \in \X
\label{eq:pi_requirement}
\end{equation}
Here, the probability is over the joint distribution of sample sets $S$ and
conditional new labels $y$.
This means that for \emph{every} $x \in \X$,
the predicted intervals should contain the labels
with probability of at least $1-\conf$,
and is in effect what methods such as quantile regression \cite{koenker1978regression,koenker2005quantile}
aim for.

Due to its worst-case nature,
the performance of predictors constructed to satisfy Eq. \eqref{eq:pi_requirement}
is dominated by the difficult instances.
Here we argue that such worst-case requirements can be overly demanding.
In accordance, we propose an objective which modifies
Eq. \eqref{eq:pi_requirement} in two important ways.
First, we relax the worst-case requirement over $x \in \X$ to hold \emph{in expectation}
over the marginal $D_X$.
Second, in the analysis (\secref{sec:theory}), we partition $\conf$
into a confidence term $\delta$ over the training set
and an accuracy term $\epsilon$ over new examples.
Overall, our aim is to guarantee that,
with probability of at least $1-\delta$ over $S \sim D_{XY}^m$,
we have:
\begin{equation}
\probsub{y \in f(x\,|\,S)}{D_{XY}} \ge 1-\epsilon
\label{eq:batch_requierment}
\end{equation}
Intuitively, this means that when the training set is representative of the distribution,
we'd like $f$ to generalize well to new examples.
In the next section, we show how this can be achieved
by minimizing an appropriate loss function.


\section{Method} \label{sec:method}

Our approach considers the task of generating PIs from 
a model-free, discriminative learning perspective.
Given a training set
$S=\{ (\xsupi, \ysupi) \}_{i=1}^m$ sampled i.i.d. from $D_{XY}$,
our goal is to learn an interval predictor $f$
with a low expected loss:
\begin{equation}
\Loss{f} = \expect{D}{\loss{y,f(x)}}
\label{eq:expect_loss}
\end{equation}
The loss function $\loss{y,f(x)}$ should quantify how well the predicted interval $f(x)$
fits the label $y$.
Since a good interval is one which contains the true label,
a natural choice 
is the following \emph{interval 0/1-loss}:
\begin{equation}
\loss{y,\intrvl} = 
\loss{y,\l,\u} = 
\begin{cases*}
0 & if  $y \in \intrvl$  \\
1 & otherwise
\end{cases*}
\label{eq:01loss}
\end{equation}
Under this loss, Eq. \eqref{eq:expect_loss} can be rewritten as:
\begin{align}
\Loss{f} &= \E_{X} \expectcond{Y|X}{\1{y \in f(x)}}{X=x} \nonumber \\[1ex]
&= \E_{X} \Big[ {\probcondsub{y \not\in f(x)}{X=x}{Y|X}} \Big]  
\label{eq:expect_01loss}
\end{align}
Hence, a good mapping is one which produces good intervals
\emph{in expectation}.
Specifically, a mapping $f$ with $\Loss{f}=\alpha$ guarantees that predicted intervals
will contain the true labels with probability $1-\conf$, on average.

When comparing the above discriminative criterion
with the standard PI criterion in Eq. \eqref{eq:pi_requirement},
it becomes clear that they differ only in their requirement over $x$.
Specifically, while Eq. \eqref{eq:pi_requirement} requires
that labels be covered with probability $1-\conf$ for \emph{all} $x \in \X$,
Eq. \eqref{eq:expect_01loss} requires this only in expectation.
The discriminative objective in Eq. \eqref{eq:expect_01loss} can therefore be seen as 
a relaxation of the classic criterion,
one which allows for variability in the probability of covering the true label.
This gives our approach some flexibility, which as we show,
can boost predictive performance.

\subsection{Learning Objective} \label{subsec:01loss}
When considering how to train a predictor over $S$,
it might at first be tempting to solve the following 
empirical risk minimization (ERM) objective:
\begin{equation}
\min_{f \in \F} \frac{1}{m} \sum_i \loss{\ysupi,\sup{\lhat}{i},\sup{\uhat}{i}}
\label{eq:silly_erm}
\end{equation}
where $\intrvlhat = f(x)$ is the predicted interval,
and $\F$ is a class of interval predictors.
However, it quickly becomes apparent that 
without constraining the size of the predicted intervals, 
the loss can be arbitrarily low for \emph{any} input.
This is because, for any reasonable $\F$,
intervals can be made large enough to always contain the true labels.
This emphasizes a
fundamental tradeoff between error and interval size,
and motivates the addition of a global \emph{budget} constraint:
\begin{equation}
\begin{aligned}
& \min_{f \in \F} & & \frac{1}{m} \sum_i \loss{\ysupi,\lhati,\uhati} \\
& \text{s.t.}
& & \lhati \le \uhati \qquad \forall \, i=1,\ldots,m \\
& & & \frac{1}{m} \sum_i \uhati-\lhati \le \budget
\end{aligned}
\label{eq:01erm}
\end{equation}
Thus, for a given budget $\budget$,
our hypothesis class is effectively restricted to include
only predictors $f$ which generate intervals with an average size of at most $\budget$.
Of these, our objective is to choose one with minimal error.
Note that the $\lhat \le \uhat$ constraints
are always feasible when $\F$ includes functions with a bias term.

The idea of fixing a budget and minimizing error in some sense
reverses their roles when compared to that of classic PI methods.
In these, 
the first step is to predetermine and fix the 
amount of tolerated error $\conf$.
Then, for a new point $x$,
the interval is set to the tightest
lower and upper bounds which contain $1-\conf$ of the 
(estimated) conditional density of $Y$ given $X$.\footnote{
    Typically under additional constraints, such as symmetric or one-sided error tails.}
Compared to this approach,
the roles of the objective and constraints in Eq. \eqref{eq:01erm} are reversed.

Nonetheless, our method can also be applied to the fixed-error setting in two ways.
First, for a given $\conf$, it is possible to solve 
a ``reversed'' variant of Eq. \eqref{eq:01erm}:
\begin{equation}
\begin{aligned}
& \min_{f \in \F} & &  \sum_i \uhati-\lhati \\
& \text{s.t.}
& & \lhati \le \uhati \qquad \forall \, i=1,\ldots,m \\
& & & \frac{1}{m} \sum_i \loss{\ysupi,\sup{\lhat}{i},\sup{\uhat}{i}} \le \conf
\end{aligned}
\label{eq:01erm_reverse}
\end{equation}
When $\loss{\cdotp}$ is convex (such as in the surrogate we consider in next section),
Eqs. \eqref{eq:01erm} and  \eqref{eq:01erm_reverse} are in fact dual,
in the sense that
for every error $\conf$ there exists a budget $\budget$
such that the solutions of Eq. \eqref{eq:01erm} and Eq. \eqref{eq:01erm_reverse} coincide.
A second option is to perform a line search over $\budget$ using Eq. \eqref{eq:01erm},
and estimate $\conf(\budget)$ over a held-out validation set.
Since the error is monotone in the budget, this can be done efficiently.
Note that a budget-error curve can be constructed by 
solving Eq. \eqref{eq:01erm} for a range of budgets (as in Fig.~\ref{fig:plots}(B)).

\subsection{Convex Surrogate} \label{subsec:surrogate_loss}
Since the labels and predictions in our setting are real,
the problem we consider may appear to be one of regression.
Nonetheless, it is in essence a classification problem,
and the combinatorial nature of the 0/1 loss in Eq. \eqref{eq:01loss}
makes it NP-hard to solve (see \secref{subsec:nphard}).
Due to this, we turn to learning with tractable surrogates.
In what follows we suggest a convex surrogate 
to the 0/1 loss, and discuss its properties.


Our surrogate loss is inspired by the \emph{$\varepsilon$-insensitive loss}
used in Support Vector Regression Machines \cite{vapnik2013nature,smola2004tutorial}:
\begin{align}
\lossprxy{y,\yhat ; \varepsilon} &= 
\max \{ 0, |y-\yhat|-\varepsilon \} \nonumber \\
&= \begin{cases*}
0 & if  $|y-\yhat| \le \varepsilon$  \\
|y-\yhat| & otherwise
\end{cases*}
\label{eq:eps_insensitive}
\end{align}
Here, a point-prediction $\yhat \in \R$ incurs no penalty if it is within
distance $\varepsilon$ of the true label $y$;
otherwise, the penalty is linear.
For the special case of $\varepsilon=0$, the $\varepsilon$-insensitive
reduces to the standard mean absolute error loss.
For $\varepsilon=1$, the loss $\lprxy$ can be thought of
as a symmetrized variant of the popular \emph{hinge loss} used in SVMs,
in which both under- and over-estimates of $y$ are penalized. 
This motivates the idea that,
just as the hinge loss serves as a proxy to 
the binary 0/1-loss,
the $\varepsilon$-insensitive loss can be used as a surrogate
to the interval 0/1-loss. 

Recall that our learning goal is to produce a low-error
interval predictor under a budget constraint
on the average interval size (Eq. \eqref{eq:01erm}).
For a given budget $\budget$,
one way to achieve this is to set $\varepsilon=\budget$
and learn a point-predictor $\fprxy : \X \rightarrow \R$ with $\lprxy$.
Then, predicted intervals are constructed via
$\lhat= \fprxy(x) - \Delta/2$ and
$\uhat = \fprxy(x) + \Delta/2$,
for $\Delta=\varepsilon$.
Note that for every $x$, the interval size is exactly $\Delta = \uhat-\lhat = B$.
Under this approach,
the training loss and test-time predictions are calibrated: 
a predicted interval is penalized only if it does not contain the true label.

Clearly, fixing all interval sizes $\Delta$ to $\budget$
is sufficient for satisfying the budget constraint.
This, however, is not necessary, as the constraint requires only that
the intervals be of size $\budget$ \emph{on average}.
Hence, interval sizes can vary,
and solutions with varying interval size can in principle 
give lower error.

The idea of varying interval sizes lies at the base of our approach.
To allow for variation, we \emph{parametrize} the interval size $\Delta$,
and jointly learn a point predictor $\yhat=f(x;w)$
and an interval-size predictor $\Deltahat=g(x;v)$. 
For learning, we use a parametrized
$\varepsilon$-insensitive loss, with a per-example insensitivity scale
$\varepsilon(x) = g(x;v)$.
The learning objective can be written as follows:
\begin{equation}
\begin{aligned}
& \min_{w,v} & & \frac{1}{m} \sum_i
\lossprxy{\ysupi,\sup{\yhat}{i} \,;\, \sup{\Deltahat}{i}/2}
\\
& \text{s.t.}
& & \sup{\Deltahat}{i} \ge 0 \qquad \forall i=1,\ldots,m \\
& & & \frac{1}{m} \sum_i \sup{\Deltahat}{i} \le \budget
\end{aligned}
\label{eq:proxy_loss}
\end{equation}
where in practice $f$ and $g$ are additionally regularized.
For a liner parameterization, namely $f(x;w)=\inner{w,x}+b$ and $g(x;v)=\inner{v,x}+a$
for $w,v \in \R^d$ and $b,a \in \R$,
the loss and constraints in Eq. \eqref{eq:proxy_loss} become convex in both $w$ and $v$.
Finding the global optimum can  be done efficiently using standard convex solvers.

Although $\yhat$ lies at the center of $\Deltahat$,
the quantiles which correspond to the interval endpoints $\lhat,\uhat$
are not necessarily symmetric, and can vary across examples.
This is in contrast to most methods which use
fixed symmetric-tail quantiles.
A possible alternative is to directly parametrize
the interval boundaries via $\lhat=f_\l(x;w_\l)$ and $\uhat=f_\u(x;w_u)$.
In the supplementary material we show that for linear predictors
both parameterizations are equivalent (up to regularization).

The objective in Eq. \eqref{eq:proxy_loss} allows for variability
in the size of predicted intervals.
As in other approaches, this can be used to account for
conditional heteroskedasticity.
But, more importantly, a differential allocation of the budget allows for
variability in the \emph{conditional probability of errors}, namely:
\begin{equation}
\P_{Y|X}[ y \not\in [\yhat-\Deltahat/2,\yhat+\Deltahat/2] \,\,|\,\, X=x]
\end{equation}
Intuitively, this allows our method to boost the overall accuracy
by ``sacrificing'' the accuracy of some points (by reducing the size of their interval)
in favor of others (by allocating them more of the budget).

Without restricting the form of $\Deltahat$,
solving Eq. \eqref{eq:proxy_loss} would most likely result
in overfitting.
In this scenario,
intervals would either be allocated just enough
budget to cover the labels exactly, or allocated no budget at all.
This outcome is clearly undesired as it can result in unstable predictions.
It hence remains to show that learning parametrized interval predictors
can lead to good generalization.
In the next section we analyze the sample complexity
of learning in our setting,
and show that the VC dimension of a class of interval predictors
is linear in the VC dimension of a corresponding class of threshold functions.


\section{Theoretical Analysis} \label{sec:theory}
In this section we consider three aspects of our approach:
the asymptotic properties of the interval constraints in Eq. \eqref{eq:01erm},
the sample complexity of learning with the 0/1-interval loss Eq. \eqref{eq:01loss},
and the computational hardness of ERM (Eq. \eqref{eq:01erm}).

\paragraph{Constraints:}
Eq. \eqref{eq:01erm} requires that 
the average interval size does not exceed the budget $\budget$,
and that each predicted interval is consistent (that is, $\lhati \le \uhati$).
For the budget constraint,
let $D$ be a random variable specifying the size of predicted intervals,\footnote{
The source of randomness for $D$ are the samples $(\xsupi, \ysupi)$ in $S$,
which determine both
the optimal $f$ and the prediction outcomes $f(\xsupi)=\intrvltmplt{\lhati}{\uhati}$.}
with instances denoted by $d$. 
The distance between the average and expected interval sizes
$|\frac{1}{m}\sum_{i=1}^m d_i - \expect{}{D}|$
can therefore be bounded using standard concentration bounds.
For consistency, note that if functions in $\F$ include a bias term,
then the constraint can always be satisfied.
This means that given an empirically consistent predictor $f$,
its expected consistency can be estimated using 
sample complexity bounds for the realizable case.
In practice, inconsistent predicted intervals can
simply be replaced by point predictions.
Alternatively, both constrains can be replaced
by a single convex constraint of the form
$\frac{1}{m} \sum_i \max\{0,\uhati-\lhati\} \le B$.

\paragraph{Sample complexity:}
As noted in \secref{sec:intro},
classic PI methods consider a single confidence term $\alpha$
defined the joint distribution of training set \emph{and} a new example.
For our analysis, we separately consider the training set and new examples.
We use PAC theory to determine, for every $\epsilon,\delta \in [0,1]$,
the minimal number of examples $m$
that guarantee an expected error of at most $\epsilon$ (over the test distribution)
with probability of at least $1-\delta$ (over the train set).
To this end, in the following \secref{subsec:vcdim}
we analyze the VC dimension of learning a 
class of interval predictors $\F$. 

\paragraph{Computational hardness:}
Many of the discontinuous losses in machine learning
lead to combinatorial optimization problems which are hard to optimize.
In \secref{subsec:nphard} we prove that this is also the case for
the interval 0/1 loss in Eq. \eqref{eq:01loss}.
To this end, we show a reduction from the NP-hard problem MAX FS,
which motivates the use of the convex proxy loss
in Eq. \eqref{eq:proxy_loss}.

\subsection{VC Dimension} \label{subsec:vcdim}
In this section we analyze the VC dimension
of batch interval prediction.
The difficulty in analysis lies in the fact that
the function classes we consider are not binary, as
both labels and predictions in our setting are real.
Albeit the regressive nature of the learning setup,
the loss is binary,
and as a result, the complexity of learning is combinatorial.
To overcome this difficulty,
the main modeling point here is that we consider \emph{both $x$ and $y$}
(and not just $x$) as the input to an hypothesis.
This allows for expressing the VC dimension of a class of interval predictors
in terms of the VC dimension of some base class of threshold functions.

Let $\Base =\{ \base:\X \rightarrow \R \}$
be a \emph{base} class of real functions (e.g., linear functions).
Denote by $\F(\Base)$ the class of interval predictors
whose interval boundaries are set by functions in $\Base$,
namely function of the form
$f_{\base_\l,\base_\u}(x) = \intrvltmplt{\base_\l(x)}{\base_\u(x)}$,
where $\base_\l,\base_\u \in \Base$.
Our goal here is to bound the VC dimension of $\F(\Base)$.
We show that this can be done by first considering 
the VC dimension of threshold functions over the base class $\Base$.

In binary classification,
a conventional way of using real functions $\base \in \Base$
for classification is to consider
the corresponding classes of threshold functions:
\begin{align*}
\Thresh_{\le}(\Base) &= \{ \1{\base(x) \le \theta} \,:\, \base \in \Base\} \\
\Thresh_{\ge}(\Base) &= \{ \1{\base(x) \ge \theta} \,:\, \base \in \Base\} 
\end{align*}
For example, when $\Base$ is the set of linear functions,
both $\Thresh_\le(\Base)$ and $\Thresh_\ge(\Base)$ are equivalent to
the standard class of halfspace classifiers, 
whose VC dimension is $d+1$.

The VC dimension of a class $\Thresh$ of binary classifiers
(such as $\Thresh_\le(\Base)$ and $\Thresh_\ge(\Base)$)
regards the number of possible label assignments
of functions $\thresh \in \Thresh$ to a set of $m$ points $\{\xsupi\}_{i=1}^m$.
The interval class $\F(\Base)$, however, is not binary, as functions
$f\in\F(\Base)$ map examples to real intervals.
Nonetheless, the true object of interest in terms of sample complexity
is rather the number of assignments of the \emph{loss function} over
a given class.
This means that, for a given sample set $S=\{(\xsupi,\ysupi)\}_{i=1}^m$ of size $m$,
we will analyze the number of possible assignments
to the interval 0/1 loss in Eq. \eqref{eq:01loss}
when learning with $\F(\Base)$.
Formally, with a slight abuse of notation,
we analyze the VC dimension of the binary class:
\begin{equation}
\L(\F) = \{ \loss{y,f(x)} \,:\, f \in \F \}
\label{eq:vc_loss}
\end{equation}

Note that analyzing the complexity of a class under
a loss function is in fact the true (though sometime implicit) goal
in standard binary classification as well.
We prove the following in the supplementary material:
\begin{lemma}
	The VC dimension of $\Thresh$ equals
	the VC dimension of $\L_{0/1}(\Thresh)$.
\end{lemma}
where $\L_{0/1}$ is appropriately defined over the 
standard binary 0/1 loss $\losssub{0/1}{y,\yhat}=\1{y\neq\yhat}$.

The main difference in the analysis of these classes
lies in the input and output of the functions they include.
For binary classifiers, the inputs are examples $x$,
and the outputs (which we'd like to shatter) are labels $y$.
In contrast,
inputs to functions in $\L(\F)$ and $\L_{0/1}(\Thresh)$ are \emph{pairs} $(x,y)$
of an example and a label,
while outputs are the binary evaluations of the loss function.

The next theorem shows that the VC dimension of $\F(B)$
under $\Loss{\cdotp}$
can be linearly bounded by the
VC dimension of a corresponding binary threshold class.
For simplicity, we focus on the case where
$\Thresh(\Base)=\Thresh_\le(\Base)=\Thresh_\ge(\Base)$,
which holds under mild assumptions.
The general case is straightforward.
\begin{theorem}
Let  $\Base =\{ \base:\X \rightarrow \R \}$ be a base class.
If the VC dimension of the threshold class $\Thresh(\Base)$ is $k$,
then the VC dimension of the interval class $\F(\Base)$
over the interval 0/1-loss in Eq. \eqref{eq:01loss} is at most $10k$.
\end{theorem}

\begin{proof}
For some $f_{\base_\l,\base_\u} \in \F(\Base)$, we have:
\begin{align}
\loss{y,f_{\base_\l,\base_\u}(x)} & = 
\begin{cases*}
0 & if  $\base_\l(x) \le y \wedge \base_\u(x) \ge y$ \\
1 & otherwise
\end{cases*}  \nonumber \\
& = 1 - \1{\base_\l(x) \le y} \cdotp  \1{\base_\u(x) \ge y} 
\label{eq:complement}
\end{align}
%
Consider a set $S=\{(\xsupi,\ysupi)\}_{i=1}^m$ of size $m$
that $\L(\F)$ shatters.
On the one hand, the $m$ pairs have $2^m$
assignments under $\L(\F)$.
On the other hand, due to Eq. \eqref{eq:complement},
the number of assignments is at most
that of $\Thresh_\le(\Base)$ times that of $\Thresh_\ge(\Base)$,
namely:
\begin{equation*}
\Pi_m \left( \L(\F(\Base)) \right) = 2^m \le
\Pi_m \left( \Thresh_\le(\Base) \right) \cdotp \Pi_m \left( \Thresh_\ge(\Base) \right)
\end{equation*}
where $\Pi_m(\cdotp)$ denotes the maximal number of assignments
of an hypothesis class over $m$ points.
Using the Sauer-Shelah Lemma and our assumptions
on the VC dimension of $\Thresh(\Base)$,
for $\vcdim \le m/3$ we have:
\begin{equation}
2^m \le
\left( \sum_{i=0}^\vcdim \binom{m}{i} \right)^2
\le \left( \frac{em}{\vcdim} \right)^{2\vcdim}
\label{eq:sauer}
\end{equation}
Solving for the maximal $m$ satisfying Eq. \eqref{eq:sauer}
gives $m \le 10\vcdim$.
Hence, $\text{VC}(\F(\Base))$ under $\L$
is at most $10\vcdim$.

\end{proof}

\begin{corollary}
Let $\Thresh(\Base)$ be a class of binary classifiers over the base class $\Base$
with VC-dimension $k$, and let $\F(\Base)$ be the corresponding interval class.
Then, for every $\epsilon, \delta \in [0,1]$,
if $\Thresh(\Base)$ is $(\epsilon,\delta)$-PAC-learnable over the \emph{binary} 0/1-loss
with $m(k)$ samples,
then F(B) is $(\epsilon,\delta)$-PAC-learnable over the \emph{interval} 0/1-loss
with $m(10k)$ samples.
Hence, for any $m' \ge m(10k)$:
\begin{equation}
\probsub{
\probsub{y \notin f(x\,|\,S)}{D_{XY}} \le \epsilon
}{S \sim D_{XY}^{m'}} \ge 1-\delta
\label{eq:corollary}
\end{equation}
\end{corollary}

\subsection{NP-hardness} \label{subsec:nphard}
In this section we establish the computational hardness
of minimizing the empirical risk over the interval 0/1-loss
as in Eq. \eqref{eq:01erm}.
We focus on the linear case where $x \in \R^d$
and the interval class is:
\begin{align*}
\F_{\text{lin}} &= \left\{ f(x) = \intrvltmplt{\base_\l(x)}{\base_\u(x)} \,:\, \base_\l,\base_\u \in \Base_{\text{lin}} \right\}, \\
\Base_{\text{lin}} &= \left\{ \base(x) = \inner{w,x}+c \,:\, w \in \R^d,\, c \in \R \right\} 
\end{align*}

\begin{theorem}
Solving Eq. \eqref{eq:01erm} over $\F_{\textup{lin}}$ is NP-hard.
\end{theorem}

%

\begin{proof}
We show a reduction 
from the NP-hard problem \maxfs\ \cite{sankaran1993note}.
In this problem,
given a (not necessarily feasible) linear system $Az=d$
of $M$ equations over $N$ variables,
the goal is to find the maximal feasible subsystem.\footnote{
The popular version of \maxfs\ considers
the system $Az \le d$, but is also NP-hard for the relations $\{<,=,\neq\}$
\cite{amaldi1995complexity}.}
Given an instance of \maxfs, namely $A \in \R^{M \times N}$
and $d \in \R^M$, we will construct a training set $S=\{(\xsupi,\ysupi)\}_{i=1}^m$
and budget $\budget$ 
with corresponding optimal solutions.
W.l.o.g. we will assume that rows in $A$ are distinct and that $M\ge2$.

Our construction is as follows.
Let $m=2M+1$. 
For every $i=1,\dots,M$, set $\xsupi=(A_{i1},\dots,A_{iN})$ and $\ysupi=d_i$,
for $i=M+1,\dots,m$ set $\xsupi=(0,\dots,0)$ and $\ysupi=0$,
and set $\budget=0$.
Let $f^*(x)=\intrvltmplt{\base_\l(x)}{\base_\u(x)}$ be a minimizer of Eq. \eqref{eq:01erm},
with $\base_\l(x)=\inner{w_\l,x}+c_\l$ and $\base_\u(x)=\inner{b_\u,x}+c_\u$.
First, note that $f^*$ must have $c_\l=c_\u=0$.
This is because a solution with no bias terms is correct
on at least $M+1>m/2$ examples, while any other solution can be correct
on at most $M \le m/2$ examples.
Second, in order to satisfy the budget and interval constraints,
it must also hold that $w_\l=w_\u=w$.
This means that for $i=1,\dots,M$, we have $f^*(\xsupi)=\ysupi$
(and therefore $\loss{\ysupi,f^*(\xsupi)}=0$)
iff the $i^{\text{th}}$ equation in $Az=d$
is satisfied by $z=w$, concluding our proof.
\end{proof}


\section{Experiments} \label{sec:experiments}

\begin{figure*}[!t]
	\begin{center}
		\includegraphics[width=\textwidth]{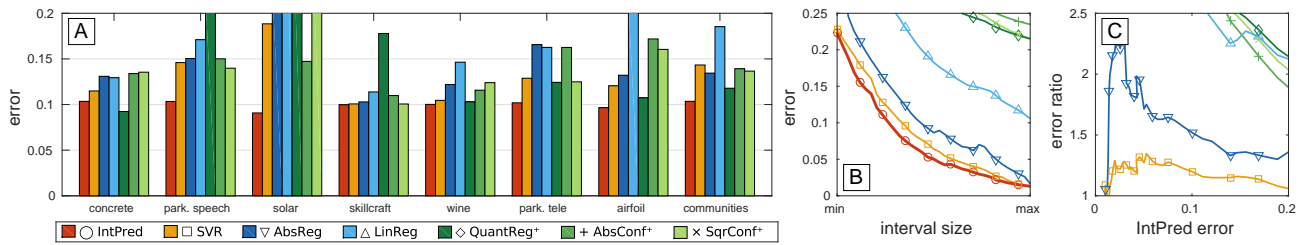}
		\caption{
(A) Accuracy across datasets,
for budgets $\budget$ under which the test error of 
\ourmthd\ is roughly 0.1.
(B)~Test error per normalized interval size, averaged over datasets.
(C) Ratio of baseline test errors vs. \ourmthd.
}
		\label{fig:plots}
	\end{center}
\end{figure*}

In this section we evaluate the performance of our method
on a collection of benchmark datasets.
We compare our method to several baselines on data from the UCI repository \cite{lichman2013uci}.
We used all dense-featured regression datasets with between 1,000 and 10,000 samples,
giving a total of 8 datasets.
All of our experiments use a 80:20 train-test split.
For each considered setting
we set all meta-parameters via 5-fold cross-validation on the training set over a tight grid.
We report average errors (1-accuracy) over 100 random data splits.
For datasets with less than 25 features, we augment each example with
pairwise terms.

At its core,
our method offers a general means for exploiting the variability in interval size
for reducing error.
While applicable to virtually any base class of real functions,
our evaluation focuses on linear functions.
This is because linear functions can be plugged into almost any method,
provide computational guarantees for our method as well as the other baselines,
and in many cases work well in practice.
By using a linear base class in all methods,
we allow for a fair comparison, where results express the statistical power
(rather than computational traits) of the different methods.

In accordance with the above,
we compare our method (\ourmthd) 
to several baselines,
some of which are designed for the fixed-error setting
(and take as input a significance lever $\conf$),
and some, likes ours, which are designed for the fixed-budget setting
(and take a budget $\budget$ as input).
Our fixed-error baselines include quantile regression (\method{QuantReg})
with symmetric tails, and batch-mode conformal prediction
with the absolute loss (\method{AbsConf}) and squared loss (\method{SqrConf}).
We use the efficient split-conformal inference method suggested in \cite{lei2015conformal,lei2017distribution}, since other conformal methods
are computationally infeasible for the datasets we consider.
Our fixed-budget baselines include
regressing with the absolute loss (\method{AbsReg}) or squared loss (\method{LinReg})
augmented with a fixed symmetric interval of size $\budget$,
as well as the fixed-budget method (\method{SVR})
discussed in \secref{subsec:surrogate_loss},
which minimizes the $\varepsilon$-insensitive loss with $\varepsilon=\budget$.
We use $L_2$ regularization in methods where this is possible.

Our main evaluation criterion is the test error,
namely the probability that
a predicted interval does not contain the true test label.
Since errors can be reduced simply by enlarging the intervals,
for a fair comparison we evaluated all methods over a fixed
set of average interval sizes $\{\budget_i\}$,
and compared for each $\budget_i$ separately.
This is easily achieved for the fixed-budget methods
(since the budget is part of the input),
but not necessarily so for the fixed-error methods,
which do not offer a direct way to control interval sizes.
To include these in the comparison, we first evaluate them
over a tight set of confidence levels $\alpha_j \in [0,1]$.
Then, we interpolate errors from the interval size outputs,
and compute approximate errors for the budgets $\budget_i$.
For interpolation, we used $3^{\text{rd}}$-order
monotonically-increasing concave splines with 5 knots over 30 points.
This gave average $R^2$ values of
0.996 for \method{QuantReg}, 0.855 for \method{AbsConf}, and 0.901 for \method{SqrConf}.

Our results are presented in Fig. \ref{fig:plots}.
In the spirit of classic PI methods, we focus on the high-confidence regime
with errors in the range $[0,0.2]$.\footnote{
Both approaches present some difficulty when targeting very low test errors.
For the fixed-budget methods,
a higher budget leads to lower train errors.
Hence, budgets which give a train error of 0 may be prone to overfitting.
For most fixed-error methods,
as there are typically no finite-sample guarantees on test error,
even setting $\conf=1$ does not guarantee an arbitrarily low test error.
}
Fig. \ref{fig:plots} (A) presents an evaluation across all datasets.
For each dataset, we show results for the $\budget_i$ under which the test accuracy
of our method was roughly 0.1.
As can be seen, \ourmthd\ outperforms all other baselines
for all but one dataset (where it ranks a close second).
Across all evaluations,
\ourmthd\ is statistically significantly better than the other methods
(Friedman test \cite{demvsar2006}, $P<10^{-10}$).
While other methods perform well over some datasets,
their performance is in general inconsistent.
This is demonstrated in Fig. \ref{fig:plots} (B)
which displays errors averaged over all datasets for 
a range of (normalized) budgets.\footnote{Since the datasets vary considerably
	in the scale and variance of label values,
	we present results for normalized budgets,
	where a normalized $\budget=1$ corresponds (approximately) to an error of 0.1.}
As the plot shows, the volatility of some baselines
(especially of the fixed-error methods)
results in high average errors.

Recall that, in effect, \method{SVR} optimizes over the same interval loss as \ourmthd,
but is constrained to fixed-size interval predictions.
Hence, comparing \ourmthd\ to \method{SVR} quantifies
the added value (in terms of accuracy) of allowing for
variability in interval size.
Fig.~\ref{fig:plots}(C)  presents a direct comparison
between the average error of \ourmthd\ and that of other methods,
where each point on the plot corresponds to a different budget $\budget_i$.
The relative performance of \method{SVR} 
demonstrates that interval-size flexibility reduces the error by roughly 20\%
across most of the focal error region.

\section{Discussion} \label{sec:discussion}
This paper presents a method for constructing
prediction intervals in an inductive batch setting.
Our approach views PI estimation as a discriminative learning task,
where the goal is to minimize the probability of error under a budget constraint.
The algorithmic, computational, and statistical results
were made possible by modifying the classic PI objective:
focusing on expected errors, allowing for error variability,
and separating accuracy from confidence.


Our experimental evaluation empirically demonstrates
the potential of allowing for variability in accuracy across examples.
As in many discriminative methods,
this boost in accuracy comes at the price of explainability,
as our method does not offer any guarantees on
the confidence associated with individual examples.
For example, a predicted interval can be small either because
the confidence is very high (and there is no reason to waist budget),
or because it is very low (and allocating budget is justified in terms of the loss).
One interesting direction worth exploring is that of augmenting
each prediction with a confidence estimate,
namely the probability that the true label is within the interval.
We leave this for future research.


%


\paragraph{Acknowledgments}
Supported in part by a grant from the Israel Science Foundation, a
grant from the United States-Israel Binational Science Foundation
(BSF), and the Israeli Centers of Research Excellence (I-CORE)
program (Center No. 4/11).
NR would like to thank Alon Gonen for his moral support, good advice, and technical contribution.

\bibliographystyle{acm}
\bibliography{confintrvls}


\newpage
\part*{Supplamentary Material}

\section{Equivalence of Parameterization}
There are two natural parameterizations of intervals.
The first is to model the interval boundaries via:
\[
\lhat=f_\l(x;w_\l), \qquad
\uhat=f_\u(x;w_\u)
\]
The second is to model the interval center and size:
\[
\yhat=f(x;w), \qquad
\Deltahat=g(x;v)
\]

Here we show that for linear predictors, both parameterizations are equivalent.
Specifically, we show that for every $(w_\l,w_\u)$
there exist some $(\tilde{w},\tilde{v})$ whose predictions coincide on all $x$,
and vice versa.

Let $w_\l,w_\u \in \R^d$, and consider some $x \in \X$.
Note that:
\begin{align*}
\yhat &= \frac{1}{2}(w_\l^\top x+w_\u^\top x) =
\big( \frac{1}{2}(w_\l+w_\u) \big)^\top x =
\tilde{w}^\top x \\
\Deltahat &= w_\u^\top x - w_\l^\top x =
(w_\u-w_\l)^\top x =
\tilde{v}^\top x
\end{align*}

Similarly, for $w,v \in \R^d$, we have:
\begin{align*}
\lhat &= w^\top x - \frac{1}{2} v^\top x = 
(w-\frac{1}{2}v)^\top x =
\tilde{w_\l}^\top x \\
\uhat &= w^\top x + \frac{1}{2} v^\top x = 
(w+\frac{1}{2}v)^\top x =
\tilde{w_\u}^\top x
\end{align*}

We note that the only practical difference here would be
when each component is regularized separately (e.g., with a different
regularization constant).

\section{VC of Functions and Losses}
Recall that for a loss function $L$ and a function class $\F$,
we define their composition as:
\begin{equation}
\L(\F) = \{ \loss{y,f(x)} \,:\, f \in \F \}
\label{eq:vc_loss}
\end{equation}
we denote the by $\L_{0/1}(\Thresh)$
the composition of the binary 0/1-loss function $\losssub{0/1}{\cdotp}$
with a binary function class $\Thresh$,
where:
\begin{equation}
\losssub{0/1}{y,\yhat}=\1{y\neq\yhat}
\label{eq:01loss}
\end{equation}

Here we prove the following claim:
\begin{lemma}
The VC dimension of $\Thresh$ equals
the VC dimension of $\L_{0/1}(\Thresh)$.
\end{lemma}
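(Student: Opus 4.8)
The plan is to prove the two inequalities $\mathrm{VCdim}(\Thresh) \le \mathrm{VCdim}(\L_{0/1}(\Thresh))$ and $\mathrm{VCdim}(\L_{0/1}(\Thresh)) \le \mathrm{VCdim}(\Thresh)$ separately, both resting on one elementary observation: for a \emph{fixed} label $y$, the map $\yhat \mapsto \losssub{0/1}{y,\yhat} = \1{y\neq\yhat}$ is a bijection from the (two-element) prediction set onto $\{0,1\}$. In words, composing with the loss and fixing $y$ merely XORs each prediction $f(x)$ against the constant bit $y$, and XOR-ing against a fixed pattern is an invertible relabeling of behavior patterns. So shattering a set of $x$-points by $\Thresh$ and shattering the corresponding pairs $(x,y)$ by $\L_{0/1}(\Thresh)$ should be interchangeable.

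For the first inequality, I would take a set $x_1,\dots,x_n$ shattered by $\Thresh$, fix any single label $y_0$, and lift to the pairs $(x_1,y_0),\dots,(x_n,y_0)$. Given an arbitrary target $(c_1,\dots,c_n)\in\{0,1\}^n$ for the loss class, inverting the bijection converts it into a unique target prediction pattern on the $x_i$; shattering of $\Thresh$ supplies an $f$ realizing that pattern, and this same $f$ realizes $(c_1,\dots,c_n)$ on the pairs. Hence the pairs are shattered and $\mathrm{VCdim}(\L_{0/1}(\Thresh)) \ge n$; taking the supremum over shatterable $x$-sets gives the inequality.

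For the reverse inequality, I would start from pairs $(x_1,y_1),\dots,(x_n,y_n)$ shattered by $\L_{0/1}(\Thresh)$ and aim to show $x_1,\dots,x_n$ are shattered by $\Thresh$. Given a target $(b_1,\dots,b_n)$ for $\Thresh$, I set $c_i = \1{y_i\neq b_i}$, obtain from shattering an $f$ with $\1{y_i\neq f(x_i)} = c_i$ for all $i$, and invert the per-coordinate bijection to conclude $f(x_i) = b_i$. The one point needing care is that shattering of $\Thresh$ is defined over \emph{distinct} input points, whereas the shattered objects here are distinct \emph{pairs}, which could in principle repeat an $x$-coordinate. I would rule this out first: if two pairs shared the same $x$ but carried different labels, a single prediction value $f(x)$ would force their two loss bits to be equal-and-opposite, so the patterns $(0,0)$ and $(1,1)$ would be unattainable, contradicting that the loss class shatters them.

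I expect this distinctness step to be the only real obstacle; everything else is a direct, computation-free translation through the fixed-label bijection. Once distinctness is established, combining the two inequalities yields $\mathrm{VCdim}(\Thresh) = \mathrm{VCdim}(\L_{0/1}(\Thresh))$, including the case where both are infinite, since the inequalities hold at the level of every finite shatterable cardinality and the suprema therefore agree.
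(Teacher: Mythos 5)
Your proof is correct, and at the top level it follows the same route as the paper: both directions rest on the observation that, for a fixed label, composing with the 0/1-loss is an invertible relabeling (an XOR) of the binary predictions. The forward direction is essentially identical — the paper lifts a shattered set $x_1,\dots,x_m$ to the pairs $(x_1,0),\dots,(x_m,0)$, which is your construction with $y_0=0$. In the reverse direction, however, your argument is the complete one and the paper's is not. Given pairs $(x_1,y_1),\dots,(x_m,y_m)$ shattered by $\L_{0/1}(\Thresh)$ and an \emph{arbitrary} target dichotomy $(b_1,\dots,b_m)$ for $\Thresh$, you pass to the loss pattern $c_i=\1{y_i\neq b_i}$ and invert the per-coordinate bijection to get $h(x_i)=b_i$; the paper instead writes ``for any $y_1,\dots,y_m$, set $z_i=0$ for all $i$,'' conflating the arbitrary target labels with the fixed labels of the shattered pairs — as literally written it exhibits only the single dichotomy $h(x_i)=y_i$, and the XOR step you supply is exactly what is needed to turn it into a proof of shattering. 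You also verify a point the paper silently assumes: that the $x$-coordinates of a shattered set of pairs are necessarily distinct (two pairs sharing an $x$ with opposite labels would make the loss patterns $(0,0)$ and $(1,1)$ simultaneously unattainable), so that $x_1,\dots,x_m$ really is a set of $m$ points to which shattering by $\Thresh$ can apply. Finally, your remark that both inequalities hold at every finite shatterable cardinality handles the infinite-VC case, which the paper's ``assume the VC dimension is $m$'' phrasing leaves implicit. In short: same skeleton as the paper, but your write-up closes genuine gaps in the paper's own reverse direction.
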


\begin{proof}
The important observation here is that while both $\Thresh$ and $\L_{0/1}(\Thresh)$
include functions with binary outputs, the functions differ in their domain.
Specifically, functions in $\Thresh$ map items $x$ to binary outputs $y \in \{0,1\}$,
while functions in $\L_{0/1}(\Thresh)$ take as input pairs $(x,y)$ with $y \in \{0,1\}$
and, via some $\thresh \in \Thresh$, output the loss value $z \in \{0,1\}$.
	
Assume the VC dimension of $\Thresh$ is $m$,
then there exist some $x_1,\dots,x_m$ which shatter $\Thresh$.
This means that for every $y_1,\dots,y_m$, there exists some $h_y \in \Thresh$
for which $h_y(x_i)=y_i$ for every $i$.
Consider the set of pairs $(x_1,0),\dots,(x_m,0)$.
For any $z_1,\dots,z_m$, we have some $h_z \in \Thresh$ for which $h_z(x_i)=z_i$
for every $i$.
This means that any $z_i=0$ gives $h_z(x_i)=0$, and hence:
\[
\losssub{0/1}{0,h_z(x_i)} = 0 = z_i
\]
Similarly, for any $z_i=1$ we have $h_z(x_i)=1$ and:
\[
\losssub{0/1}{0,h_z(x_i)} = 1 = z_i
\]

Now, assume the VC dimension of $\L_{0/1}(\Thresh)$ is $m$.
Then there exist some $(x_1,y_1),\dots,(x_m,y_m)$ which shatter $\L_{0/1}(\Thresh)$.
This means that for every $z_1,\dots,z_m$, there exists some $h_z \in \Thresh$
such that $\losssub{0/1}{y_i,h_z(x_i)} = z_i$.
Consider the set $x_1,\dots,x_m$.
For any $y_1,\dots,y_m$, set $z_i=0$ for all $i$.
Hence, the corresponding $h_z$ is such that $\losssub{0/1}{y_i,h_z(x_i)} = 0$ for all $i$, 
which means that $h(x_i)=y_i$ as needed.
\end{proof}


\end{document}